%
\documentclass[twoside]{article}

\usepackage[accepted]{aistats2016}

\usepackage{caption}
\usepackage{subfigure}

\usepackage[utf8]{inputenc}
\usepackage[english]{babel}
\usepackage{float}
\usepackage{blindtext}
\usepackage{amssymb}

\usepackage{times}

\usepackage{amsthm}
\usepackage[authoryear]{natbib}
\usepackage[final]{graphicx}
\usepackage[utf8]{inputenc}
\usepackage{amsmath}
\usepackage{tikz}

\usetikzlibrary{automata,positioning,arrows}
\usepackage[ruled]{algorithm2e}

\def\BState{\State\hskip-\ALG@thistlm}
\newtheorem{theorem}{Theorem}[section]

\begin{document}

\twocolumn[

\aistatstitle{TSEB: More Efficient Thompson Sampling for Policy Learning}

\aistatsauthor{ Prasanna P \And Sarath Chandar \And Balaraman Ravindran }

\aistatsaddress{ IIT Madras\\\texttt{pp1403@gmail.com} \And University of Montreal\\\texttt{apsarathchandar@gmail.com} \And IIT Madras\\\texttt{ravi@cse.iitm.ac.in} } ]

%
%
%
%
%


\begin{abstract}In model-based solution approaches to the problem of learning in an unknown environment, exploring to learn the model parameters takes a toll on the {\it regret}. The optimal performance with respect to {\it regret} or {\it PAC} bounds is achievable, if the algorithm exploits with respect to reward or explores with respect to the model parameters, respectively. In this paper, we propose TSEB, a Thompson Sampling based algorithm with adaptive exploration bonus that aims to solve the problem with tighter PAC guarantees, while being cautious on the regret as well. The proposed approach maintains distributions over the model parameters which are successively refined with more experience. At any given time, the agent solves a model sampled from this distribution, and the sampled reward distribution is skewed by an exploration bonus in order to generate more informative exploration. The policy by solving is then used for generating more experience that helps in updating the posterior over the model parameters. We provide a detailed analysis of the PAC guarantees, and convergence of the proposed approach. We show that our adaptive exploration bonus encourages the additional exploration required for better PAC bounds on the algorithm. We provide empirical analysis on two different simulated domains.%
%
\end{abstract}

\section{INTRODUCTION}

In the standard Reinforcement Learning (RL) framework, the environment with which the agent interacts is modeled as a Markov Decision Process (MDP) and the goal of the agent is to learn a policy such that the cumulative reward it receives is maximized over a finite or an infinite horizon. If the parameters of the MDP are known, then the learning process is straight forward,  and the optimal policy can be learnt by traditional DP-methods \citep{rlbook}. However, in any real life application, the parameters of the MDP are not known {\it a priori}. In such a scenario, the agent can try to directly learn the policy that maximizes the return (model-free learning) or the agent can try to estimate the parameters of the MDP and learn a policy based on the learnt MDP (model-based learning). 

Recently Model-based learning approaches have been receiving increasing attention  \citep{stren,kolter,singhvar,vanroy}. In model-based RL the goal of the agent is two-fold. First, it should estimate the true parameters of the model. Second, it should also behave {\it optimally} during the phase of learning the model parameters. This is yet another instance of exploration-exploitation dilemma in Reinforcement Learning. The agent has to explore to learn the model parameters, but trying to be explorative in improving the belief over the model parameters reduces the performance, i.e., sum of cumulative rewards over a certain number of time-steps. In this approach, the belief over the parameters of the model gets updated as and when the agent receives a sample, $(s_t,a_t,s_{t+1},r_{t+1})$, where $s_t$ is the state the agent is at time $t$, $a_t$ is the action the agent took at time $t$ and $s_{t+1}$ and $r_{t+1}$ are next state and its corresponding reward. As the number of samples increases, the belief converges to the true parameters of the MDP. 

Among model based methods, Bayesian approaches are particularly attractive due to their amenability for theoretical analysis, and the convenient posterior update rule. Much of the recent work has been focused on {\it Thompson sampling} (TS) \citep{ts1933} based approaches both in simpler bandit settings \citep{chapelle,shipra,Shipralinear,Aditya}, as well as the full MDP problem \citep{stren,adiTS}. The Bayesian RL approach proposed in \citep{stren} is an episodic way of incrementally converging to the true parameters of the model. The learning happens in phases, where after each phase, the agent estimates a posterior distribution over the parameters and samples a model for the next episode. The agent solves for an optimal policy with the sampled parameters and uses it to generate trajectories in that episode followed by updating the posterior. This approach of posterior sampling is known as Thompson sampling \citep{ts1933}. The structure of the Bayesian learning as discussed provides a non-zero probability mass over the true model parameters guaranteeing convergence.

Ever since Chapelle and Li \citep{chapelle} discussed the efficacy of TS approaches for reinforcement learning, there have been concerted attempts made to achieve a better theoretical understanding of such approaches. Apart from the results in the bandit setting, Thompson sampling approach for full RL has been shown to work well in practice and has been shown to be regret optimal \citep{adiTS}. However, there are no PAC guarantees in the literature for the Thompson sampling approach. To achieve PAC guarantees we need to encourage {\it more aggressive} exploration than enjoined by the basic TS approach and one way to do that is to use an exploration bonus. For e.g., \citep{kolter} proposed Bayesian Exploration Bonus (BEB) algorithm which added a constant exploration bonus to the problem of solving for an optimal policy in an unknown environment. \citep{kolter} computes a point estimate of the MDP and solves for the optimal policy in every episode. This improves on an another exploration bonus approach, MBIE-EB \citep{mbieb}, in terms of the PAC bounds. Note that in general, adding exploration bonus to the learning agent results in better performance with respect to PAC but may not result in a regret optimal algorithm. 

The primary contribution of our work is TSEB (Thompson Sampling with Exploration Bonus), a Thompson Sampling algorithm that uses an adaptive exploration bonus. As with usual TS approaches, TSEB also maintains a distribution over the parameter space. But the sampling strategy employs an adaptive exploration bonus - when a model is sampled from the distribution in each phase, the rewards of the sampled model are modified. This exploration bonus at that state is related to the current uncertainty in the parameter estimates for the state and this leads to more informative trajectories being generated in each episode. The exploration in most Thompson sampling approaches lead to optimal regret. We show that TSEB encourages additional exploration required for better PAC bounds. To our knowledge this is the first work in the literature to provide PAC guarantees for TS. We empirically show that by appropriately tuning a trade-off parameter we can improve performance with respect to regret as well.

Major contributions of this work are, 
\begin{itemize}
\item Introducing an adaptive, $value$ based, exploration bonus to aid model based learning agent.
\item Providing a tighter PAC guarantee for TS, with exploration bonus. 
\item Theoretically showing the convergence of the algorithm.
\item Empirically showing the inadequacy of TS to be PAC optimal.
\end{itemize}

The rest of the paper is organized as follows. Section 2 describes the preliminaries. We describe the TSEB algorithm in Section 3 followed by theoretical analysis of TSEB in section 4. Section 5 discusses about the regret guarantees of TSEB. In section 6, we experimentally analyse TSEB in 2 domains. Section 7 discusses the related work, and section 8 concludes the paper. 

\section{PRELIMINARIES}

In reinforcement learning, a learning agent interacts with a world modeled as a MDP, $<${\it S,A,R,T,$\gamma$}$>$. The MDP description consists of, {\it S}, the set of all states, {\it A}, the set of all actions, {\it R}, the reward function, {\it R}$:S$x$A\rightarrow\mathbb{R}$, {\it T}, the transition function, $T:S$x$A$x$S\rightarrow[0,1]$ and the discount factor $\gamma \in (0,1)$. The agent has to learn an optimal action mapping, $\pi^*:S\rightarrow A$ that maximizes the cumulative reward over a finite or infinite horizon, $H$. When the model parameters are known, the optimal policy, $\pi^*$, can be obtained by solving the MDP using classical DP-techniques like value-iteration, policy iteration or by optimization methods.

Further in the discussion we will use a metric to bound the distance between the value function of the sampled and the true MDP. The metric is inspired from the homomorphism literature \citep{kbcs}. Consider two different MDPs $M_1$ and $M_2$. Let the $\max$ norm over the difference in their rewards be $K_r$, and in transition be $K_p$, and the {\it range} of the rewards in $M_1$ be $\delta_{r}$. The similarity between the problem of homomorphism and the problem of estimating the closeness of sampled MDP to true MDP is subtle. The structure, state space, $S$ and action space, $A$, remains the same and the $R_2(s)$, reward function in the {\it true} MDP, can be approximated with the samples obtained from the world. Thus, with the given descriptions the difference in the {\it values} of the state in $M_1$ and $M_2$, $v_1$ and $v_2$, which we define as the {\it f-function}, can be bounded by the following expression \citep{kbcs},
\begin{equation}
\label{f-func}
||v_1-v_2||_{\infty}= \frac{2}{1-\gamma}\left[K_r + \frac{\gamma}{1-\gamma}\delta_{r}K_p\right]
\end{equation}

If we have a Dirichlet distribution governing the transition, $K_p$ can be bounded by $\frac{1}{n(s,a)}$ \citep{singhvar}, where $n(s,a)$ is the number of times the state-action pair was observed. We refer Eqn. \ref{f-func} as $f(K_r,\gamma)$. The $\delta_{r}$ is upper bounded by $2$ in a normalized bounded rewards setting ($r_t\ \in$ [-1,1]), 
\begin{equation}
\label{HMbound}
f(K_r,\gamma)=\frac{2}{1-\gamma}\left[K_r + \frac{2\gamma}{(1-\gamma)\underset{s\in\mathcal{S}, a\in\mathcal{A}}\min n(s,a)}\right]
\end{equation}

$K_r$ is estimated from the difference between the expected reward sampled in an episode $e$ for an arbitrary state $s$ to the empirical mean of the rewards sampled with samples obtained till episode $e-1$ and time-step $t-1$ in that episode.  This measure, {\it f-function}, provides a measure of variance of the sampled MDP. This is an {\it unbiased} estimate of the distance from the true reward.

Let $\hat{R}_n$ be the sample average constructed with {\it n} samples, across the episodes. As $n\rightarrow\infty$,
\begin{equation}
\frac{1}{n}\sum_n^{\infty}\hat{R}_n(s,a)-\mathbb E[R(s,a)]\rightarrow 0
\end{equation}

The above expression shows that the computed sample average is an unbiased estimate of the expectation of the random variable $R(s,a)$, reward of state s.

\section{TSEB ALGORITHM}

TSEB is an episodic approach, where the incrementally sampled model grows closer to the true model. Solving the converged model gives us a near optimal policy. From the problem as posed, it is intuitive to understand that the agent has to learn the true model to converge to an optimal policy, $\pi^*$. TSEB has a modified Bellman update that considers the exploration bonus. The reward, thus, is a convex combination of the reward obtained from the sampled world, and the exploration bonus computed for that state $\rho(s)$ (or  $\rho(s,a)$) in that episode. The Bellman update will be,
\begin{equation}
V(s)=\lambda R(s,a)+(1-\lambda)\rho(s,a)+\gamma\sum_{s'} P(s'|s,a)V(s')
\end{equation}
\begin{equation}
\rho(s,a)=\frac{f_s(K_r,\gamma)}{n(s,a)}
\end{equation}
where $f_s(k_r,\gamma)$ provides an upper bound on the difference in the value of the state $s$ between the true and sampled MDP. {\it n(s,a)} is the number of times {\it s} was visited.

The $f_s$ term is similar to the $f(K_r,\gamma)$ defined in the preliminary section except that instead of $||.||_{\infty}$ it will be computed for that particular state, $s$ ({ \it $\underset{s}\min\ n(s,a)$ gets replaced by the {\it n(s,a)}}). 

\begin{equation}
f_s(K_r,\gamma)=\frac{2}{1-\gamma}\left[K_r + \frac{2\gamma}{(1-\gamma)n(s,a)}\right]
\label{ffun6}
\end{equation}

The algorithm follows a greedy policy and takes the max action with respect to the modified Bellman update. As $\rho(s,a)$ decays with every visit, the agent explores the state space adaptively. The updates from the sampled trajectories help the distribution to narrow its belief- reducing the variance over the distribution. As the agent samples from those states that are useful, by following a greedy policy, more often after a few episodes, the sampled MDP might not be $\epsilon$ close to the true MDP, or cannot be guaranteed. But, the parameters of the better rewarding states will be close to the optimal, thus providing an $\epsilon$-optimal policy. Note that TSEB learns optimal policy for states which are $useful$ and the notion of $useful$ states evolve over the episodes. Thus, even though TSEB does not learn optimal policy for the true MDP, it learns a near-optimal policy which is more close to the optimal policy in states that will be often visited by the agent.

The linear decay of the exploration bonus makes the exploration bonus to become insignificant either when the parameters are closer to the true MDP or if the number of visits is large.

    \renewcommand\baselinestretch{1}
    \begin{algorithm}[t]
    
    \caption{Thompson Sampling with Exploration Bonus (TSEB)}\label{tseb}
    
    \SetKw{return}{return}
    \SetKw{from}{from}
    \SetKw{downto}{downto}
    \SetKw{step}{step}
    \KwIn{Parameter Space $\Theta$, prior over $\Theta$, action space $\mathcal{A}$, state space $\mathcal{S}$, and $\gamma$.\newline \hspace*{-1.1cm}\textbf{Define: }E ~~$\triangleright$ Number of episodes.\newline $T_e$ ~~$\triangleright$ Number of time-steps in episode $e$.\newline $\rho$ ~~$\triangleright$ Exploration Bonus.\newline $r_t^e$~~$\triangleright$ reward at time step $t$ in episode $e$. \newline $r$~~$\triangleright$ reward obtained by taking an action.\newline $V_e$~~$\triangleright$ Value function in episode $e$. \newline $R_e$~~$\triangleright$ Reward function sampled for episode $e$. \newline $P_e$~~$\triangleright$ Transistion function sampled for episode $e$. \\ \\} 
    
    \KwOut{policy $\pi$}
  
   \For{e in range (E)}{
     $M_{\theta}\gets$Sample $R_e\ and\ P_e$ from posterior\\
     $V_e\gets$ Solve\_for\_$\pi^*(M_{\theta}$)\\
    \For{t in range ($T_e$)}{
    $\pi(s_t^e)\gets\max_{a\in A} ((\lambda R(s_t^e,a) +(1-\lambda)\rho(s')$\\ $~~~~~~~~~~~~~~~~~~~~~~~~~~~~~+\gamma\sum_{s'}P_e(s'|s,a)V_e(s'|s,a)
    )$\\
    r$\gets$take\_action($\pi(s_t^e)$)\\
    n($s_t^e$)$\gets$ n($s_t^e$)+1\\
    n($s_t^e,s_{t-1}^e$)$\gets$ n($s_t^e,s_{t-1}^e$)+1\\
    r($s_t^e$)$\gets$r($s_t^e$)+$\frac{1}{n(s_t^e)}\left[r-r(s_t^e)\right]$\\
    $\rho(s_t^e)\gets  \frac{\rho(s_t^e) + f(s_t^e)}{n(s_t^e)}$\\
   }  
   Update the posterior: $\pi_{t+1}(d\theta)\propto p(S_t,A_t,R_t,S_t')\pi(d\theta)$
}
        \end{algorithm}

TSEB, unlike the most other previous algorithms (Except \citep{singhvar}) uses the uncertainty in the estimates to structure the exploration bonus. This helps us model real world systems, which have inherent uncertainty. Exploiting the inherent uncertainty in the events to decide on exploration is the key feature of our work. The exploration bonus entails the PAC guarantees of the algorithm. Further theoretical analysis shows us that the bound is indeed tighter than in \citep{singhvar}. The exploration bonus is computed here even more cleverly, thus avoiding integrating over the parameters, compared to \citep{singhvar}. Also there is a principal difference with \citep{kolter}, wherein the exploration of the agent is concentrated around the uncertain region and the uncertainty is not assumed to be uniform over the world. This formulation can also be extended to provide exploration bonus {\it a priori}. Apart from the prior over the model parameters, prior over the exploration bonus helps in learning the model faster and better. We don't analyze {\it a priori} exploration bonus in this work.

\section{THEORETICAL ANALYSIS OF TSEB}

PAC analysis provides an upper bound on the number of sub-optimal steps of an asymptotic agent that is required for the algorithm to converge to an $\epsilon$-optimal solution with probability $1-\delta$. Let us assume the algorithm requires a set of samples, {\it M}. Let each sample be $(s_t,a_t,s_{t+1},r_{t+1})$. Though, TS theory gives us regret guarantees at $\mathcal{O}$($\log$T) \citep{adiTS}, where $T$ is the number of time-steps, we don't have a notion of PAC-bound for TS. This is primarily because the algorithm has to be explorative to learn the model parameters for it to be PAC-optimal. The conundrum here is, if the algorithm is explorative its regret will be worse. Hence, the greedy action selection doesn't let the agent to be explorative. In TSEB with the addition of exploration bonus and thereby skewing the sampled MDP in a way that would make the exploration as part of rewards, the agent can still be greedy with the action selection and provide a sample guarantee. The linear decay of exploration bonus helps the agent converge to the optimal policy like in the TS setting.

\begin{theorem} Exploration using the defined exploration bonus , $\rho$, leads to a monotonic convergence of sampled parameters to the optimal parameters. 
\end{theorem}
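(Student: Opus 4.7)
The plan is to use the $f$-function bound from Equation~\eqref{HMbound} as a distance measure between the sampled MDP and the true MDP, and show that this distance decreases monotonically (in expectation over the posterior) as the algorithm accumulates samples. Concretely, I would fix an arbitrary state-action pair $(s,a)$, decompose the bound into its reward term $K_r$ and its transition term proportional to $1/n(s,a)$, and argue that both are non-increasing after each episode.

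First, I would establish that the exploration bonus $\rho(s,a) = f_s(K_r,\gamma)/n(s,a)$ is a strictly positive, strictly decreasing function of the visit count $n(s,a)$. Since the modified Bellman update weights $\rho$ by $(1-\lambda)$, any state-action pair with low $n(s,a)$ receives an inflated value estimate. I would then argue that greedy action selection with respect to this inflated value is equivalent to an optimistic policy that biases the agent toward under-visited $(s,a)$ pairs, so within each episode at least one poorly-explored pair has its count incremented with positive probability. This closes one of the two loops: on every episode the minimum visit count $\min_{s,a} n(s,a)$ is non-decreasing, so the $K_p$-contribution $\frac{2\gamma}{(1-\gamma)\min_{s,a} n(s,a)}$ of $f(K_r,\gamma)$ is non-increasing.

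Next, I would handle the reward-variance term $K_r$. Since $\hat R_n(s,a)$ is an unbiased sample average of i.i.d. rewards (as observed in the preliminaries), a martingale/Hoeffding argument shows that $|\hat R_n(s,a) - \mathbb{E}[R(s,a)]|$ concentrates at rate $O(1/\sqrt{n(s,a)})$. Combined with the posterior update rule, the expected deviation between a sample drawn from the posterior and the true mean reward is non-increasing in $n(s,a)$; hence $\mathbb{E}[K_r]$ decreases monotonically. Plugging both bounds back into Equation~\eqref{HMbound} gives $\mathbb{E}[f(K_r,\gamma)]_{e+1} \leq \mathbb{E}[f(K_r,\gamma)]_e$, which is exactly monotonic convergence of the sampled parameters to the true ones as measured by the induced value-function gap, and it can be strengthened to strict monotonicity whenever $\rho>0$ drives a new visit.

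The main obstacle I foresee is the stochasticity of Thompson sampling: a rare draw from the posterior can produce a sampled MDP that looks \emph{further} from the true MDP than the previous one, so monotonicity cannot hold pathwise. I would resolve this by formulating the statement as monotonic convergence in expectation (or, equivalently, monotonicity of the posterior variance), and by invoking the conjugacy of the Dirichlet-over-transitions and the Gaussian/Beta-over-rewards priors, both of which have the property that posterior variance is non-increasing with each observation. A secondary subtlety is that TSEB, as noted in Section~3, only needs to converge on the \emph{useful} states reached by the greedy policy; I would therefore restrict the monotonicity claim to the set of recurrently visited $(s,a)$ pairs, and show that states outside this set contribute a vanishing amount to the $f$-function bound because their exploration bonus is eventually dominated by the value of the greedy trajectory.
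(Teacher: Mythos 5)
Your proposal follows essentially the same skeleton as the paper's proof: reduce the claim to monotone decrease of the $f$-function of Eqn.~\ref{HMbound}, split it into the reward term $K_r$ and the transition term proportional to $1/n(s,a)$, and drive both down by the monotone growth of the visit counts. Where you diverge is in the two sub-arguments. For the transition term, the paper simply asserts that $n_{min}(s,a)$ increases because samples are never discarded (Eqn.~\ref{conv}); you instead justify \emph{why} the under-visited pairs actually get visited, via the optimism induced by $\rho(s,a)=f_s(K_r,\gamma)/n(s,a)$ in the modified Bellman update -- an argument the paper needs but does not supply. For the reward term, the paper bounds the rate of change $\Delta K_r \leq c\gamma/((1-\gamma)n(s,a))$ by a first-order stationarity argument on $\Delta f$, whereas you use a Hoeffding/martingale concentration of the sample mean at rate $O(1/\sqrt{n(s,a)})$ together with shrinkage of the posterior variance; your route is more standard and gives an explicit mechanism, though note the two rates differ ($1/n$ versus $1/\sqrt{n}$), so the quantitative saturation bound you would obtain is weaker than the paper's Eqn.~\ref{conv}. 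Finally, you correctly flag something the paper glosses over entirely: a single posterior draw can land \emph{farther} from the true MDP, so pathwise monotonicity is false and the statement only makes sense in expectation (equivalently, as monotonicity of the posterior variance under Dirichlet/Beta conjugacy); your restriction to recurrently visited pairs likewise mirrors, and makes precise, the paper's informal remark that TSEB only converges on the \emph{useful} states. Both proofs share the residual weakness that monotone decrease of an upper bound does not by itself force the counts of every relevant pair to diverge, but your version is at least as rigorous as the paper's on every step.
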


\begin{proof} This theorem essentially says that adding exploration does not affect the monotonic convergence of Thompson Sampling. To prove this theorem, it is sufficient to show that the exploration bonus leads to a monotonic convergence of the f-function. The $f$-function defined in Eqn. \ref{ffun6} for every episode, by way of doing posterior sampling, decreases with samples. As the $f$-function is an unbiased estimate of the difference in the value of the true and the sampled MDP (Eqn. \ref{f-func}), the decrease in the {\it f-function} indicates that the sampled model parameters are closer to the true MDP. The {\it f-function} will converge to an $\epsilon'$ such that $\epsilon'\geq0$. For any number of samples further, the sampled MDP lies within the $\epsilon$-ball of the true MDP. To bound the saturation point, let us estimate the rate of change of $f$ with respect to timestep, $\Delta f$,
\begin{equation}
\Delta f\leq\frac{2}{1-\gamma}\left[\Delta K_r + \frac{\gamma}{1-\gamma}c \frac{1}{n(s,a)}\right]
\end{equation}

Since the rewards are bounded,  rate of change of {\it range} can be bounded by a constant {\it c}, $0<c\leq$ 2. At optimum, the first order derivative vanishes. Hence,
\begin{equation}
\Delta K_r\leq\frac{c\gamma}{(1-\gamma)n(s,a)}
\end{equation}

The sum over differences across all the states gives an upper bound on the {\it true} distance between the sampled and {\it} actual MDP. Hence, in an episode this is,
\begin{equation}
\sum_{s,a\in\mathcal{S}\times\mathcal{A}}\Delta K_r=\sum_{s,a\in\mathcal{S}\times\mathcal{A}}\frac{c\gamma}{(1-\gamma)n(s,a)}
\end{equation}

Now, with $\mathcal{S}$, the cardinality of the set of states, and $\mathcal{A}$, the cardinality of set of actions, this can be upper bounded by,
\begin{equation}
\sum\Delta K_r\leq\frac{\mathcal{SA}c\gamma}{(1-\gamma)n_{min}(s,a)}
\end{equation}

Let the sum over differences be denoted by $\tau$, then,
\begin{equation}
\tau \leq \frac{\mathcal{SA}c\gamma}{(1-\gamma)n_{min}(s,a)}
\label{conv}
\end{equation}

Eqn. \ref{conv} states that the $\min_{s\in\mathcal{S}}$n(s,a) is inversely related to $\tau$. And, $\tau$ is directly proportional to $f$. As we don't discard the samples, the $n_{min}(s,a)$ increases monotonically thus letting the $\tau$ to decrease monotonically. The saturation of the upper-bound on the distance, {\it f-function}, provides a formal guarantee of the convergence of TSEB.
\end{proof}

\textbf{PAC-MDP:} An RL algorithm is said to be PAC-MDP, if for any MDP, M, $\epsilon > 0$, $0< \delta <1$, the sample complexity of the algorithm is bounded by some function f that is polynomial in {\it S, A, $1/\epsilon$, $1/\delta$, and $\frac{1}{1-\gamma}$}, with probability at least 1-$\delta$.

\begin{theorem} After M = $\mathcal{O}\left(\frac{SAf_0(K_r,\gamma)}{\epsilon^2} \right)$ steps, TSEB converges to an $\epsilon$-optimal value function with probability 1-$\delta$.
\label{pac}
\end{theorem}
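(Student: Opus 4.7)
The plan is to combine the monotonic convergence of the $f$-function from Theorem 1 with standard concentration inequalities on the empirical rewards, and then use a union bound to translate per-state visit complexity into a global sample complexity.

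First I would use Hoeffding's inequality on the per state--action reward samples. Since rewards lie in $[-1,1]$, with probability $1 - \delta'$ the empirical mean $\hat{R}(s,a)$ satisfies $|\hat{R}(s,a) - \mathbb{E}[R(s,a)]| \leq \sqrt{\log(2/\delta')/(2n(s,a))}$, which gives an explicit probabilistic bound on $K_r$. Substituting this into the $f$-function expression in Eqn.~\ref{ffun6} yields
\begin{equation}
f_s(K_r,\gamma) \le \frac{2}{1-\gamma}\!\left[\sqrt{\tfrac{\log(2/\delta')}{2\, n(s,a)}} + \frac{2\gamma}{(1-\gamma)\, n(s,a)}\right].
\end{equation}
Requiring this to be at most $\epsilon$ and solving for $n(s,a)$ gives a per state--action visit complexity of the form $n(s,a) \geq C \log(1/\delta')/\bigl(\epsilon^2 (1-\gamma)^2\bigr)$, absorbing constants. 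A union bound over the $SA$ state--action pairs, setting $\delta' = \delta/(SA)$, then guarantees uniform $\epsilon$-closeness of the sampled and true value functions with probability at least $1-\delta$.

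Second, I would argue that the exploration bonus $\rho(s,a)=f_s(K_r,\gamma)/n(s,a)$ forces each state--action pair to accumulate the required number of visits within $M = \mathcal{O}\bigl(SAf_0(K_r,\gamma)/\epsilon^2\bigr)$ total steps. The key observation is that whenever $n(s,a)$ is small relative to the threshold above, the bonus $\rho(s,a)$ dominates the reward term in the modified Bellman update, so the greedy policy over the skewed MDP naturally selects the under-visited pair. Coupled with Theorem 1, which ensures that the $f$-function (and hence the bonus) decays monotonically as visits accumulate, this gives a cover-time style argument: an under-visited $(s,a)$ cannot persist, because its bonus stays large and pulls the policy toward it. Summing the per-pair visit requirements over all $SA$ pairs yields a total sample complexity of the stated polynomial form.

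The main obstacle is the second step: translating the action-level preference induced by the bonus into a quantitative visit guarantee. Unlike BEB, where the bonus is state-uniform and the argument is straightforward, here the bonus is adaptive and its propagation through the Bellman backup must be controlled. I would handle this by a contradiction argument -- suppose after $M$ steps some $(s,a)$ has $n(s,a)$ below the Hoeffding threshold; then $\rho(s,a)$ exceeds the maximum possible reward contribution by a margin, so the greedy policy in the sampled MDP must have chosen $a$ at $s$ whenever $s$ was reached, contradicting the assumption provided $s$ itself is visited enough, which one obtains by recursion on the induced Markov chain together with the fact that the initial state distribution has full support (or via an assumption of ergodicity standard in the PAC-MDP literature). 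Once each $(s,a)$ is visited the required number of times, the first step closes the argument and delivers the claimed $\epsilon$-optimal value function with probability $1-\delta$.
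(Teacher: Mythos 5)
Your proposal is broadly sound but follows a genuinely different route from the paper, and in one important respect it is more careful. For the per-pair visit requirement, the paper does not use Hoeffding plus a union bound with $\delta' = \delta/(SA)$; instead it invokes a variance-based Chernoff bound $P\bigl(\frac{1}{n}\sum_i R_i \geq R^* + \epsilon\bigr) \leq e^{-\epsilon^2/(4\sigma^2)}$, sets this equal to $\delta$ to obtain $\sigma^2 = \epsilon^2/(4\log\frac{1}{\delta})$, identifies $\sigma^2$ with the cumulative value discrepancy bounded by $S A f(K_r,\gamma)$, and then inverts $f_0(K_r,\gamma)/n \geq \epsilon^2/(4\log\frac{1}{\delta})$ to get $n \leq 4 f_0(K_r,\gamma)\log\frac{1}{\delta}/\epsilon^2$ per state, multiplying by $SA$ at the end. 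Your Hoeffding-based derivation makes the dependence on $K_r$ explicit and the union bound is cleaner than the paper's identification of a variance with a value-function gap (which is asserted rather than derived), at the cost of an extra $\log(SA)$ inside the logarithm that the stated bound suppresses. The more significant divergence is your second step: the paper contains no analogue of your cover-time argument. It simply multiplies the per-state visit bound by $SA$ and declares that to be the total sample complexity, never establishing that the greedy policy on the bonus-skewed MDP actually delivers the required visits to every state--action pair within $M$ steps. Your contradiction argument is the right instinct and addresses a real gap in the paper's reasoning, but as sketched it needs two ingredients the paper does not supply: a reachability or ergodicity assumption so that under-visited states are actually attainable, and a condition on the trade-off parameter $\lambda$ (bounded away from $1$) so that the bonus term $(1-\lambda)\rho(s,a)$ can in fact dominate the reward term in the modified Bellman update. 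Without pinning those down, your second step remains a plan rather than a proof; with them, your argument would be strictly stronger than what the paper provides.
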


{\it Outline of the Proof.} Let $p(u)$ be the expected probability of selecting an action that will lead the agent to an unexplored state. Let us define a positive non-zero number, $T$, which is the number of time steps in each episodes such that the expected number of visits to unexplored states is {\it at least} 1. Let there be a finite positive integer $k$, the number of times a state has to be visited for its exploration bonus to become insignificant implying that the state has been explored.
 
With the above notations, the number of episodes required to converge to the true MDP parameters will be {\it kSA}, where $S$ and $A$ are the cardinality of state and action sets. We define the samples required for the algorithm to be optimal as {\it kSAT}. The expression has {\it k}, and {\it T} which are not known. The expressions obtained in this section, for the sample complexity, maps to {\it k} and {\it T} indirectly.
 
By showing the $||.||_{\infty}$ of difference between the true MDP and the sampled MDP monotonically decreases with every sample and f-function provides a finite length converging sequence, we can compute the total sample complexity, $M$. We consider variance-based concentration measure, as it is more applicable for deriving the bounds for TSEB, and provides a sharper concentration measure than the Chernoff-Bound used in the analysis of UCB like algorithms. Let $X_1,X_2,...,X_n$ be independent random variables with $\mathbb{E}[X_i] = \mu$ and $Var[X_i] = \sigma^2$. Then for $\epsilon >$0,

\begin{equation}
P\left(\frac{1}{n}\sum_{i=1}^n X_i \geq \mu + \epsilon\right) \leq e^{\frac{-\epsilon^2}{4\sigma^2}}
\end{equation}

This is an extension of the Chernoff bounds \citep{chernoff1952} in a known variance setting.

\begin{proof} 
From {\it Variance bounds} definition, for a sample of reward sequence from a single state, $(R)_i$. Let $\mathbb{E}[R_i] = R^*$, and $Var[R_i] = \sigma^2$
\begin{equation}
P\left(\frac{1}{n}\sum_{i=1}^n R_i \geq R^* + \epsilon\right)  =  e^{\frac{-\epsilon^2}{4\sigma^2}} = \delta \end{equation}

\begin{equation}
\sigma^2= \frac{\epsilon^2}{4\log\frac{1}{\delta}}
\end{equation}

The above equation expresses the relation between $(\epsilon,\delta)$ and ($\sigma^2$). $\sigma^2$ in the above equation is the summation of differences in {\it value} of states between the true and sampled MDP. This can be upper bounded by $S*A*f(K_r,\gamma)$.

Further we need to establish that the exploration bonus decays with the variance of the model parameters. By definition, the exploration bonus $\rho(s,a)$ is a cumulative sum of differences between the sampled state parameters and an unbiased estimate of the  parameters. This,
\begin{equation}
\rho(s,a) = \frac{1}{n}\sum_n|\hat{\mathbb{E}}[\theta_{s,a}] - \theta_{s,a}'|
\label{1norm}
\end{equation}

where, 
$\theta_{s,a}$ is the augmented notation for the state's parameter, $\hat{\mathbb{E}}[\theta_{s,a}]$ is an unbiased estimate of the mean, and $\theta_{s,a}'$ is the sampled parameter.

The variance of the estimate will be {\it 2-norm} of the estimate above, Eqn. \ref{1norm}, and we look at cumulative {1-norm}. The difference occurs in the magnitude of the convergence rate, but the point of convergence remains the same. Hence, we use a variance based complexity bound to upper bound the number of samples.

For a tighter $\epsilon$, variance has to be smaller. This implies that $f(K_r,\gamma)$ has to be smaller. As repeated sampling of trajectory decreases the variance, this is set as an adaptive exploration bonus to the agent. Now, with $n$ being the number of visits to a state $s$, and $f_o(K_r,\gamma)$ being the expected initial distance of the sampled MDP from the true MDP with respect to the prior,
\begin{equation}
\frac{f_0(K_r,\gamma)}{n}\geq \frac{\epsilon^2}{4\log\frac{1}{\delta}}
\end{equation}

The upper bound on the number of visits to an individual state is given by,

\begin{equation}
n \leq \frac{4f_0(K_r,\gamma)}{\epsilon^2}\log\frac{1}{\delta}
\end{equation}

The total sample complexity, M, for ($\epsilon,\delta$) guarantee on the converged MDP, with {\it S} and {\it A} being the cardinalities of set of states and actions, is bounded by,
\begin{equation}
M \leq \frac{4SAf_0(K_r,\gamma)}{\epsilon^2}\log\frac{1}{\delta} 
\end{equation}

\begin{equation}
 M  = \mathcal{O}\left(\frac{SAf_0(K_r,\gamma)}{\epsilon^2} \right)
 \label{totsam}
\end{equation}

Eqn. \ref{totsam} shows that the upper bound on the sample complexity is dependent on the initial estimates of the model. $M$, the total sample complexity is adaptive, as it is a function of the distance between the sampled MDP and the true MDP. The bound, hence, is adaptive and theoretically better than the earlier bounds on sample complexity for a PAC-MDP.
\end{proof}

\begin{table}[ht]
\centering
\caption{The table shows the existing PAC bounds for a model based learning setting.}
\begin{tabular}{|c|c|}
 \hline
 \textbf{Algorithm}& \textbf{PAC- Bounds}  \\
 \hline
  MBIE \citep{mbieb}& $O\left( \frac{S^2AR_{\max}^5ln^3\frac{SAR_{\max}}{(1-\gamma)\epsilon\delta}}{(1-\gamma)^6\epsilon^3} \right)$\\
 \hline
 BEB \citep{kolter}& $O\left( \frac{SAH^6}{\epsilon^2}\log \frac{SA}{\delta}\right)$ \\
 \hline

 Variance Based \citep{singhvar}& $O\left(\frac{\gamma^2S^4A^2}{\delta\epsilon^2(1-\gamma)^2}\right)$\\
 \hline
 TSEB & $O\left(\frac{SAf_0(K_r,\gamma)}{\epsilon^2}\log \frac{1}{\delta}\right)$\\
 \hline
\end{tabular}

\label{existingbounds}
\end{table}

Table \ref{existingbounds} lists all the existing PAC bounds for model based learning setting. Note that TSEB is the only Thompson Sampling algorithm in this list. PAC bound for TSEB is better than MBIE and variance based method while $f_0$ term has $SA$ term which makes this bound higher when compared to BEB. However, TSEB also performs better in regret which is not guaranteed with BEB.

\section{ARGUMENTS ON REGRET}
The discussion so far elucidates the PAC guarantees offered by the TSEB algorithm. The claim of the algorithm being not going worse in {\it regret} has not been addressed so far. Following a greedy policy from the sampled MDP is not very different from the TS approach. The parameters sampled in every episode grow closer to the true model as discussed empirically and theoretically in earlier corresponding sections. As the TSEB agent acts greedily with the sampled model parameters and the model parameters converge, the agent after a certain number of episodes will be acting optimally with the true parameters, $\theta^*$. Because, the greedy policy in the $M_{\theta^*}$ will be an optimal policy $\pi^*$.

The exploration bonus, a linearly decaying component in the modified Bellman update, will become insignificant even if it doesn't become zero. This ensures that TSEB behaves like pure Thompson Sampling after sufficient exploration. Let us define regret at any arbitrary step $i$, $\Delta_i$, as
\begin{equation}
\Delta_i = \mu^* -\mu_i
\end{equation}

Where, $\mu^*$ is the expected reward by taking the optimal action and $\mu_i$ is the average reward obtained at a step $i$.
The expected regret $E(R)$,
\begin{equation}
E(R) = \sum_i\Delta_iE(T^i)
\end{equation}

where, $T^i$ is the number of sub-optimal steps in an episode and $\Delta_i$, the expected regret in episode $i$, different from the previous definition. From previous sections, we can observe that the $T_i^n$ is a converging sequence and so is $\Delta_i$, because of the greedy policy that is mandated in the TS algorithm.

From the algorithm, it is clear that it behaves like the {\it true} Thompson Sampling algorithm after a point when the exploration bonuses becomes numerically insignificant. The sub-optimal steps taken by the agent falls into the two cases,

\begin{itemize}
\item When the sampled parameters are off from the {\it true} model parameters and the agent takes a greedy action.
\item Taking an action that is not the optimal action with respect to the sampled MDP.(May be due to the uncertainty in the action selection.)
\end{itemize}

The recent work on regret in parameterized MDP \citep{adiTS} is a major contribution to the regret analysis of the full RL Thompson sampling approach. The arguments for the regret analysis of TSEB can be done similar to the TSMDP, but varies in the additive constant term. The big-oh notation of the regret makes it insignificant and hence the same analysis holds.

\section{EMPIRICAL ANALYSIS}

In this section, we experimentally analyze the performance of TSEB. We run experiments in two simulated domains, {\it Chain world} \citep{kolter} and {\it Queuing Domain} \citep{adiTS}. The aim of the experiments is to experimentally validate the claim of convergence of the belief and analyze the algorithm under different values of the trade-off parameter, $\lambda \in $[0,1].

\subsection{CHAIN WORLD} 
The chain domain has 5 states and 2 actions $a$ and $b$. The agent can take both the actions from any state. With probability 0.2 the agent takes the opposite action than the one selected. The transitions and reward are shown in the figure. The first state has a stochastic reward, from a Gaussian $\mathcal{N}\left(0.2,0.5\right)$. The optimal policy with $\gamma=0.8$ is to take action $a$ in all the states. The algorithm is experimented on different values of the trade-off parameter (Table \ref{chaintab}). The analysis shows better performance (cumulative sum of rewards) on every non-zero value. This is intuitive, because when $\lambda$ is 0 the algorithm behaves only to reduce the variance and ignores the rewards obtained in the world. This behavior is expected. But, the performance increases with increase in $\lambda$ and decreases after 0.5. The performance has high variance and is inconsistent when $\lambda=1$; this is the TS case. $\lambda$ = 0.1 has the maximum cumulative reward in this case.
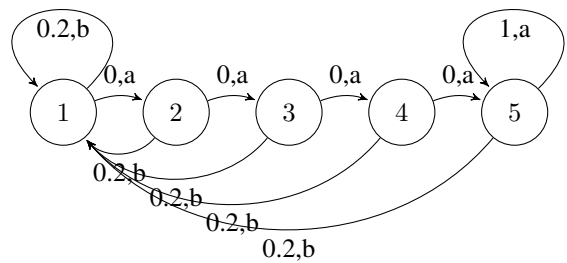
\begin{figure}[ht]
\centering
\begin{tikzpicture}[>=stealth',shorten >=1pt,auto,node distance=1.5cm]

\centering
  \node[state]         (q1)  {$1$};
  \node[state]         (q2) [right of=q1] {$2$};
  \node[state]         (q3) [right of=q2] {$3$};
  \node[state]         (q4) [right of=q3] {$4$};
  \node[state]         (q5) [right of=q4] {$5$};

  \path[->]          (q1)  edge   [bend left=20]   node {0,a} (q2);
  \path[->]          (q1)  edge   [loop]           node {0.2,b} (q1);
  \path[->]          (q2)  edge   [bend left=20]   node {0,a} (q3);    
  \path[->]          (q3)  edge   [bend left=20]   node {0,a} (q4);
  \path[->]          (q4)  edge   [bend left=20]   node {0,a} (q5);
  \path[->]          (q5)  edge   [loop]           node {1,a} (q5);
  \path[->]          (q5)  edge   [bend left=50]   node {0.2,b} (q1);
  \path[->]          (q4)  edge   [bend left=50]   node {0.2,b} (q1);
  \path[->]          (q3)  edge   [bend left=50]   node {0.2,b} (q1);
  \path[->]          (q2)  edge   [bend left=50]   node {0.2,b} (q1);

\end{tikzpicture}
\caption{Chain world domain}

\end{figure}

\begin{table}[t]
\centering
\caption{Average cumulative reward under different $\lambda$ in Chain World}
\vspace{0.2in}
\begin{tabular}{|c|c|}
\hline
\hline
 $\lambda$& \textbf{Average cumulative reward}  \\
 \hline
 0.0 &1382.80\\ 
\textbf{0.1} &\textbf{1963.74}\\
0.2 &1951.72\\
0.3 &1944.65\\
0.4 &1954.06\\
0.5 &1956.22\\
0.6 &1955.14\\
0.7 &1953.99\\
0.8 &1940.77\\
0.9 &1934.99\\
1.0 &1942.63\\
\hline
\end{tabular}

\label{chaintab}
\end{table}

Further we analyze the convergence of the model parameters in Fig.\ref{tf-plot} and Fig.\ref{f-plot}. We plot the {\it f-function} against the {\it Episodes}. The graphs explain that the posterior sampling with exploration bonus converges faster. When $\lambda=0$, the plot shows that the algorithm converges to an inferior model. The inferiority in the model corresponds to the higher f-value. The f-value for $\lambda$ = 0.5 converges to a much better model. This can be argued because the agent considers both the variance in the model parameters as well as the reward obtained in the true world to be maximized, thus converging to a better model.

\begin{figure}[H]
\centering
\subfigure[Convergence of f-function, for different $\lambda$ values in Chain world.]{
\includegraphics[scale=0.21]{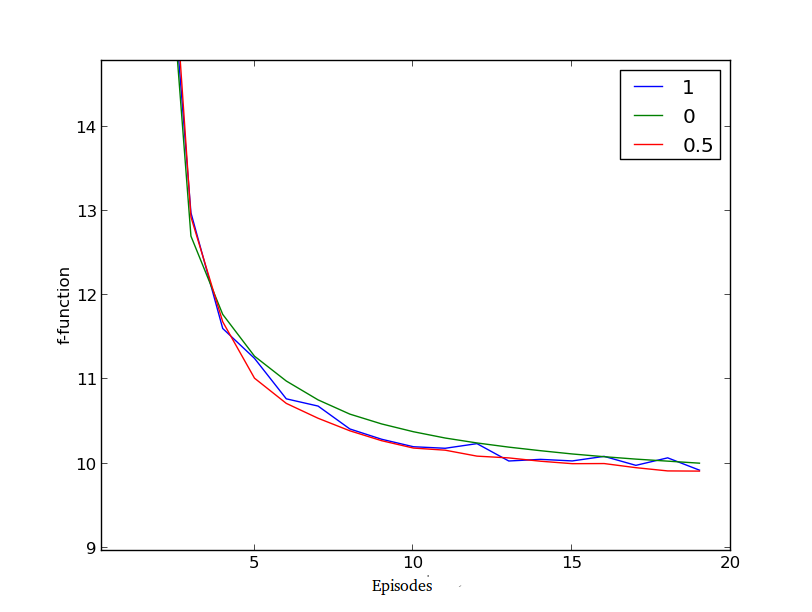}
\label{tf-plot}
}
\subfigure[Convergence of upper bound on f-function for different $\lambda$ values in Chain world.]{%
\includegraphics[scale=0.3]{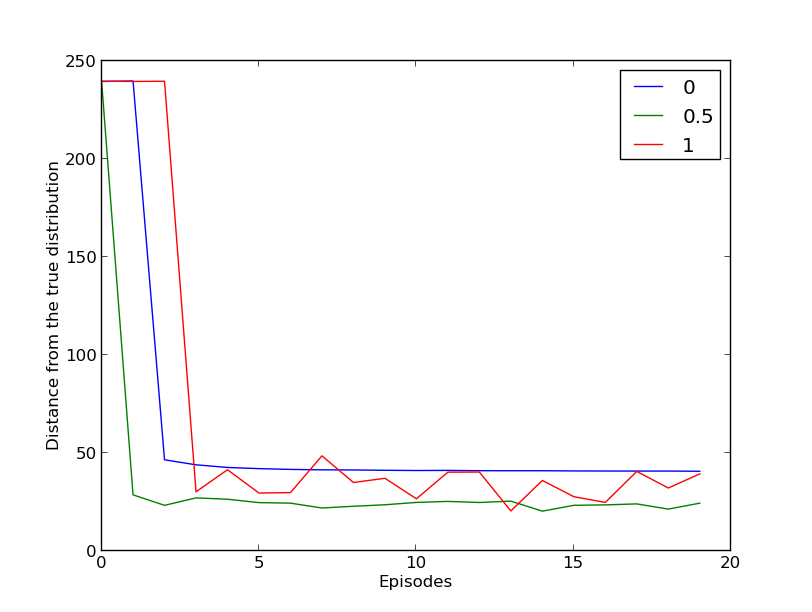}
\label{f-plot}
}
\\
\subfigure[Average regret for Chain World domain.]{%
\includegraphics[scale=0.3]{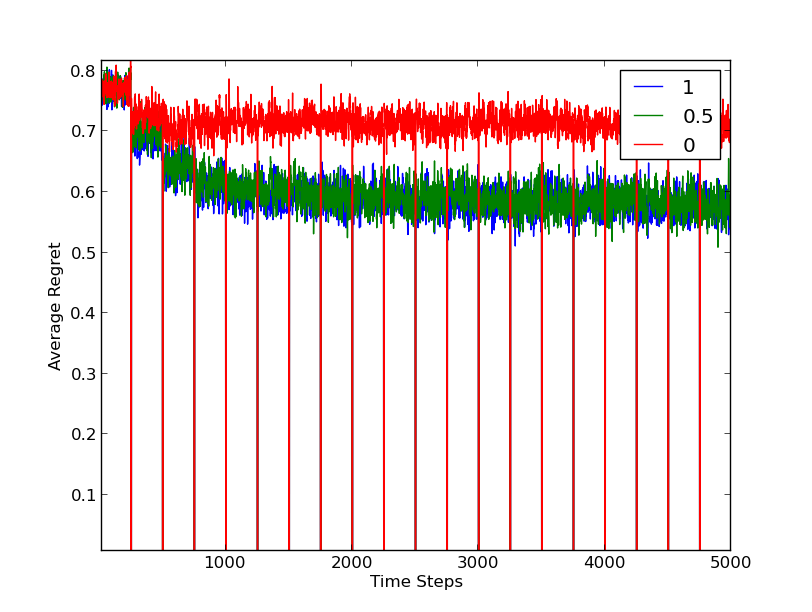}
\label{avreg}
    }

\caption{Chain world Domain}

\end{figure}

Thompson sampling, which is a special case when $\lambda$=1, keeps oscillating and doesn't converge. This is because of the lack of exploration. The graph relates the distance between the sampled MDP and true MDP to the number of samples. As the agent in TS set up acts greedily, the exploration of the agent is poor. The agent has to explore to converge to the true model parameters. TS, being regret optimal always chooses the greedy action and doesn't explore the state-space well.  Hence, the poor PAC-guarantees of TS is experimentally validated. Similarly, the better PAC guarantees that can be obtained by inducing exploration bonus is validated as well.

Figure \ref{avreg} shows the average regret for all three cases. We see that $\lambda = 0.5$ which results in better average reward (from Table \ref{chaintab}) is not doing worse in regret when compared to pure TS setting ($\lambda=1$). 

\subsection{QUEUING WORLD}
We analyse the TSEB algorithm with different $\lambda$ values (Table \ref{qu}) in the Queuing world defined in \citep{adiTS}. The states of the MDP is simply the number of packets in the queue at any given time, i.e.,S=$\{$0,1,2,...,50$\}$. At any given time, one of 2 actions: Action 1 (SLOW service) and Action 2 (FAST service) may be chosen, i.e., A=\{1,2\}. Applying SLOW (resp. FAST) service results in serving one packet from the queue with probability 0. 3 (resp. 0.8) if it is not empty, i.e., the service model is Bernoulli($\mu_i$) where $\mu_i$ is the packet processing probability under service type i= 1,2. Actions 1 and 2 incur a per-instant cost of 0 and -0.25 units respectively. In addition to this cost, there is a holding cost of -0.1 per packet in the queue at all times. The system gains a reward of +1 units whenever a packet is served from the queue.

\begin{table}[ht]
\centering
\caption{Average cumulative reward under different $\lambda$ in Queuing World}
\begin{tabular}{|c|c|}
\hline
\hline
 $\lambda$& \textbf{Average cumulative reward}  \\
 \hline
 0.0 &-5050\\ 
0.1 &-5061\\
0.2 &-5038\\
0.3 &-5048\\
0.4 &-5051\\
0.5 &-5042\\
0.6 &-5040\\
0.7 &-5062\\
0.8 &-5038\\
0.9 &-5026\\
\textbf{1.0} &\textbf{-5023}\\
\hline
\end{tabular}

\label{qu}
\end{table}


The comparison in Table. \ref{qu} shows the cumulative reward for different settings. $\lambda=1.0$, the regret optimal case, outperformed the others. This is because the model didn't have much variance in the parameter, so the learning was faster. Hence, the regret optimal way was better than the rest. 

The two worlds provide two different scenarios: one in which the difference between the performance with different $\lambda$ values is large (Chain Domain), two in which the difference is less (Queuing Domain). Both the experiments suggest that the combination of exploration bonus and the true rewards in the MDP provides a better performance. As the Chain domain doesn't offer negative rewards to the agent, exploration as well does pay off well for the agent. But relying only on exploration bonus, $\lambda$=0, doesn't let it converge to the optimal policy. Hence, the reason the agent accumulates better reward when it is not being exploration centric. On the other hand, in the Queuing domain, the agent receives negative rewards as well, this doesn't aid the agent being over explorative, and the variance in the model parameters are less as well. Hence it accumulates better cumulative reward when it is {\it regret} optimal. These two experiments suggest a heuristic to tune $\lambda$. $\lambda$ can be dynamically adapted with respect to unbiased variance in the reward parameter estimate. We leave this as a future work. 


\section{RELATED WORK}

{\it Optimism in the face of uncertainty}, is an appreciated approach and reasonably widely applied in practice. The approach over-estimates the {\it state-value} or {\it action-value} estimates with some heuristic to aid in exploration of the agent. In \citep{kael}, an algorithm proposed as {\it Interval estimation Q-learning} (IEQ), the action with the highest upper bound on the underlying Q-value gets chosen. This work also asserts that the gradual decay of the over-estimation lets the agent converge to the optimal policy. This has been followed in approaches as early as UCB\citep{auer}, where the empirical mean, $\hat{\mu}_i$, of an arm {\it i} is over-estimated by the confidence interval of the estimated mean. And, for solving an MDP, the UCRL \citep{ucrl} takes an approach inspired by the UCB technique for over estimation to aid exploration. This provides a logarithmic regret bounds in an MDP setting. In an unknown environment setting, the variance based approach to over estimate the {\it value} of a state to aid in exploration was proposed in \citep{singhvar}, but it is not a TS approach. 

Quite a few approaches have addressed the sample complexity issue in RL. But, while being sample efficient the regret gets worse. And, hence PSRL is better when regret optimal learning is needed. Also, the theoretical guarantees of TS have not been analyzed until recently \citep{shipra}. Similar guarantees, though, were not extended to the PAC setting. Recently, \citep{adiTS} gave a regret analysis of TS in full MDP setting that is logarithmic in $T$, the time-steps. \citep{vanroy} highlighted an information-theoretic analysis of TS, giving a better regret bound, considering the entropy of the action-distribution. In the last decade, parameter estimation was extended for the MDP setting; an episodic way of solving for the model estimation in unknown environment \citep{stren}.  

More recently, \citep{kolter} proposed Bayesian Exploration Policy (BEB) algorithm which added a constant exploration bonus to the standard (non-Thompson sampling based) Bayesian RL. This is improved upon the MBIE-EB \citep{mbie}, an interval based exploration bonus algorithm, by increasing the decay rate. \citep{kolter} states that the Bayesian approach cannot have a PAC solution if it doesn't encode an exploration bonus. So, BEB first proposed a bound on the samples, which is the first PAC-analysis of the Bayesian RL. In line of \citep{stren} BOSS, {\it Best of Sampled Sets} \citep{boss} that samples multiple models and merges them. The framework then runs trajectories on the derived MDP. It has a constant B, the number of visits for the agent to know a state's parameters. Note that TSEB can be extended to BOSS setting by sampling multiple MDPs, and following TSEB exploration bonus.

From the literature, it is evident that quite a few approaches were looked at in solving for an optimal policy. The most recent of them include computing the mean MDP \citep{kolter} and ML MDP \citep{singhvar}. As the two approaches compute a point estimate, it is theoretically very likely that the probability mass over the true model parameters becomes zero or converges to a very bad estimate in certain cases. The TS approach on the other hand is a pure Bayesian technique that keeps updating the belief and samples a new MDP from the updated samples. This, though converges, is only regret optima, so provides a very bad PAC-estimate. Thus, we showed that adding a better exploration bonus, can make the traditional TS sample efficient and converges to a {\it PAC-MDP}.

\section{CONCLUSION}	

In this work we propose TSEB - a Thompson sampling approach to model-based RL that uses an adaptive exploration bonus. This is the first TS variant that provides a PAC bound. We introduced a trade-off parameter that controls how much the exploration bonus influences the policy learnt on a sampled MDP. Tuning this parameter allows us to achieve better empirical performance with respect to the regret as well. While this work provides initial intuition into the PAC analysis of TS, more work needs to be done to establish a theory of useful exploration bonus and performance guarantees. Extending the model estimation to a non-parameterized setting, devoid of tight constraints over the parameter space, will also be an useful extension that will be applicable to a wide range of problems.

\newpage

%
%
\renewcommand\bibname{References}
\makeatletter
\makeatother
\bibliography{biblio}
\bibliographystyle{apalike}
\end{document}